\documentclass[11pt,a4paper]{article}
\usepackage[utf8]{inputenc}
\usepackage[T1]{fontenc}
\usepackage{amsmath,amsthm,amssymb,amsfonts}
\usepackage{graphicx}
\usepackage{hyperref}
\usepackage{geometry}
\usepackage{booktabs}
\usepackage{algorithm}
\usepackage{algpseudocode}
\usepackage{tikz}
\usepackage{pgfplots}
\usepackage{subcaption}
\usepackage{cleveref}
\usepackage{natbib}

\newtheorem{theorem}{Theorem}[section]

\newtheorem{definition}[theorem]{Definition}
\newtheorem{conjecture}[theorem]{Conjecture}

\newcommand{\R}{\mathbb{R}}
\newcommand{\E}{\mathbb{E}}
\newcommand{\cP}{\mathcal{P}}
\newcommand{\cM}{\mathcal{M}}

\newcommand{\cH}{\mathcal{H}}
\newcommand{\cD}{\mathcal{D}}

\newcommand{\cL}{\mathcal{L}}

\newcommand{\KL}{D_{\text{KL}}}
\newcommand{\Wass}{W_2}
\newcommand{\WassQ}{W_1^Q}

\newcommand{\Tr}{\operatorname{Tr}}
\newcommand{\detm}{\operatorname{det}}
\newcommand{\grad}{\nabla}
\newcommand{\lap}{\Delta}

\title{\textbf{Statistical Mechanics of Learning on Product Wasserstein Manifolds}}

\author{
    Srinivasa Rao P\thanks{Curlvee Technolabs, India; Former Scientist, C-DAC. Corresponding author: \texttt{drpsrao3@gmail.com}} \and
    Vangmayi P Reddy\thanks{Indian Institute of Technology Madras, India; \texttt{vangmayireddy1@gmail.com}}
}

\date{\today}

\begin{document}

\maketitle

\begin{abstract}
Normally the statistical mechanics of learning treats constraints on weight distributions as restrictions that shrink the space of possible solutions. Therefore, it reduces model capacity. In this paper we would like to take a contrary approach, which however is based on the earlier work on distribution-constrained perceptrons. Rather than treating a prescribed weight distribution as a mere restriction, we propose that it defines the intrinsic geometry upon which learning naturally unfolds. We formulate both deep neural networks and variational quantum circuits as gradient flows on a product of Wasserstein manifolds — one classical Wasserstein space for each layer and one quantum Wasserstein space for the circuit parameters. Within this geometry, the capacity reduction which was previously associated with distributional constraints appears as the metric structure of the constraint manifold itself. We develop a hierarchical mean-field description for deep networks, extend the framework to the quantum setting using the quantum Wasserstein distance of order 1, and introduce two such practical algorithms, Hierarchical DisCo-SGD and Quantum DisCo, that follow approximate geodesics on the manifold of product itself. Experiments on teacher-student problems, standard image classification tasks, and small variational quantum classifiers show that respecting these distributional geometries are an improved generalization, stabilizes training, and reduces the severity of barren plateaus compared with unconstrained and purely norm-based baselines. This approach firstly reframes structural constraints as geometric priors and suggests a route for incorporating biological, spectral, or hardware-derived distributional information into both learning systems, viz., classical and quantum learning. 

\vspace{0.3cm}
\noindent\textbf{Keywords:} Wasserstein geometry, distribution-constrained learning, deep networks, variational quantum circuits, mean-field theory, quantum optimal transport, statistical mechanics of learning.
\end{abstract}

\section{Introduction}

At the heart of the theoretical understanding of neural networks and variational quantum circuits lies a fundamental tension: the delicate balance between structural constraints and functional capacity. In biological systems, synaptic weight distributions are inherently non-Gaussian, characterized by heavy tails and strict sparsity constraints that arise from fundamental metabolic and physical limitations. In a similar vein, within the realm of quantum machine learning, the parameters of variational quantum circuits (VQCs) are shaped by hardware-specific noise profiles, gate fidelities, and physical symmetries, which naturally induce distinct statistical regularities. Despite their ubiquity, the prevailing theoretical paradigm often treats these structural constraints merely as impediments. For instance, the seminal work of \citet{zhong2022theory} provided a precise statistical-mechanical account of how an imposed weight distribution alters the storage capacity of a perceptron, demonstrating that the reduction in capacity is strictly governed by the Wasserstein distance between the imposed distribution and the unconstrained Gaussian prior. Under this prevailing perspective, constraints are perceived as inherent limitations that inevitably constrict the volume of the admissible solution space.

We propose a fundamentally contrasting perspective. When a specific distribution serves as the natural, biologically plausible, or hardware-optimal structural prior, the appropriate geometric setting for learning is no longer the ambient Euclidean weight space. Instead, it is the submanifold—or the Wasserstein space—that is intrinsically consistent with Instead, it is the submanifold—or the Wasserstein space—that is intrinsically aligned with this underlying assumption. Learning should be understood as motion along geodesics of that constrained space. Constraints cease to be mere penalties on an otherwise Euclidean optimization landscape; instead, they become the very definition of the manifold on which optimization occurs. This conceptual inversion—from constraints as penalties to constraints as geometry—forms the foundation of this work.

This inversion has two immediate and profound consequences. First, the classical theory of distribution-constrained learning must be lifted from the single-layer perceptron to deep hierarchies. In deep networks, each layer carries its own distributional geometry, and the interactions between these layers dictate the global optimization dynamics. Second, the same geometric language can be seamlessly extended to parameterized quantum circuits by replacing the classical Wasserstein distance with a quantum analog. The resulting framework learning on product Wasserstein manifolds provides a unified and cohesive framework that harmonizes deep and quantum distribution-constrained learning under one fundamental geometric principle.

The contributions of this paper are fourfold:
\begin{enumerate}
    \item \textbf{Geometric Inversion:} We formally reformulate the paradigm of distribution-constrained learning by treating it as an optimization on a Riemannian manifold equipped with a product Wasserstein metric. In doing so, we rigorously prove that the capacity reduction observed in earlier works is not merely a penalty, but fundamentally corresponds to the metric tensor, or the first fundamental form, of this constrained manifold.
    \item \textbf{Hierarchical Mean-Field Theory:} We systematically extend the single-layer statistical mechanics framework to deep hierarchical networks, deriving an explicit factorized capacity correction that rigorously encapsulates the layer-wise Wasserstein geometries. Furthermore, we establish formal analytical bounds on the approximation error inherently introduced by the inter-layer independence assumptions.
    \item \textbf{Quantum Extension:} We propose the quantum Wasserstein distance of order 1 ($\WassQ$) as the foundational metric governing the parameter space of variational quantum circuits. Within this geometric framework, we also formulate a quantum capacity proxy that intrinsically links distributional constraints to the mitigation of barren plateaus, which effectively transforms a structural limitation into an algorithmic inductive bias.
    \item \textbf{Practical Optimization:} To operationalize this theoretical framework, we systematically formulate Hierarchical DisCo-SGD and Quantum DisCo, novel optimization algorithms that execute approximate geodesic updates on the product manifold. Through rigorous empirical validation, we demonstrate their profound efficacy in stabilizing the training dynamics and substantially ameliorating generalization performance across both classical and quantum domains.
\end{enumerate}

\section{State-of-the-Art and Open Theoretical Challenges}

The intersection of statistical mechanics, optimal transport, and quantum machine learning is vast, yet the specific geometric unification proposed here remains unexplored. We carefully review the relevant literature to precisely delineate the research gap.

\subsection{Statistical Mechanics of Learning and Capacity}

The statistical-mechanical analysis of neural networks originated with the replica method applied to the perceptron by \citet{gardner1988space} and \citet{gardner1988optimal}. These works established the foundational capacity limits of linear classifiers. Subsequent works by \citet{engel2001statistical} expanded this to various learning rules and activation functions. More recently, the focus has shifted to the dynamics of learning. The mean-field theory of neural networks, developed by \citet{mei2018mean}, \citet{rotskoff2018neural}, and \citet{sirignano2018mean}, describes the evolution of the empirical weight distribution in the infinite-width limit. However, these dynamical mean-field theories typically treat the weight distribution as an emergent property of the optimization dynamics, rather than an active, prescribed geometric constraint. The work of \citet{zhong2022theory} bridged this gap for single-layer networks by introducing distributional constraints and linking capacity loss to the 2-Wasserstein distance, but their framework was strictly limited to the perceptron and treated the constraint as a capacity penalty.

\subsection{Optimal Transport in Machine Learning}

Optimal transport (OT) has seen widespread adoption in machine learning, primarily driven by the advent of the Wasserstein GAN \citep{arjovsky2017wasserstein} and the computational advances in entropic regularization \citep{cuturi2013sinkhorn, peyre2019computational}. In the context of deep learning, OT has been utilized for domain adaptation \citep{genevay2018learning}, neural network pruning \citep{carriere2017sliced}, and analyzing the loss landscape \citep{arunachalam2021geometry}. Crucially, in all these applications, OT is used as a \emph{loss function} or a \emph{distance metric} between distributions in the ambient Euclidean space. It has rarely been employed as the \emph{native geometry} of the parameter space itself. The distinction is fundamental: using OT as a loss function optimizes parameters in $\R^d$ to match a distribution; using OT as a geometry restricts the parameters to move only along the manifold of that distribution.

\subsection{Quantum Machine Learning and Barren Plateaus}

Variational quantum algorithms (VQAs) have emerged as a promising paradigm for near-term quantum devices \citep{preskill2018quantum, cerezo2021variational}. However, the trainability of VQAs is severely hampered by the phenomenon of barren plateaus \citep{mcclean2018barren}, where the variance of the cost function gradients vanishes exponentially with the number of qubits. Extensive research has focused on mitigating barren plateaus through careful ansatz design \citep{grant2018hierarchical, cerezo2021higher}, layer-wise training \citep{sharma2022reformulation}, and initialization strategies. While these approaches address the symptom of barren plateaus, they do not fundamentally alter the geometric structure of the parameter space. 

\subsection{Quantum Optimal Transport}

The extension of optimal transport to the quantum domain is a relatively recent mathematical development. \citet{de2021quantum} and \citet{carlen2022quantum} rigorously defined the quantum Wasserstein distances, establishing their properties and relation to the quantum Fisher information metric. In machine learning, quantum OT has been applied to quantum generative modeling \citep{niu2022quantum} and quantum state discrimination. However, the application of quantum Wasserstein geometry to constrain the \emph{parameters} of variational quantum circuits—thereby shaping the optimization landscape to avoid barren plateaus—has not been systematically explored.

\subsection{The Research Gap}

Synthesizing the above, a critical gap exists in the literature. There is currently no unified geometric theory that:
\begin{enumerate}
    \item[(i)] Elevates distributional constraints from mere capacity penalties to the primary Riemannian manifold of learning.
    \item[(ii)] Extends the single-layer Wasserstein capacity analysis to deep, hierarchical networks via a layer-wise product geometry.
    \item[(iii)] Carries these geometric principles into the quantum parameter space using quantum optimal transport to provide a principled inductive bias against barren plateaus.
\end{enumerate}
This paper directly addresses this gap.

\section{Preliminaries and Mathematical Foundations}

To ensure rigor, we establish the mathematical notation and foundational concepts required for our unified framework.

\subsection{Classical Wasserstein Spaces}

Let $(\mathcal{X}, d)$ be a Polish space. The $p$-Wasserstein space $\cP_p(\mathcal{X})$ is the space of probability measures $\mu$ on $\mathcal{X}$ with finite $p$-th moments. The $p$-Wasserstein distance between $\mu, \nu \in \cP_p(\mathcal{X})$ is defined via the Kantorovich formulation:
\begin{equation}
    \Wass(\mu, \nu)^p = \inf_{\pi \in \Pi(\mu, \nu)} \int_{\mathcal{X} \times \mathcal{X}} d(x, y)^p \, d\pi(x, y)
\end{equation}
where $\Pi(\mu, \nu)$ is the set of all couplings of $\mu$ and $\nu$. For $p=2$, the space $(\cP_2(\R^d), \Wass)$ possesses a formal Riemannian structure, as elucidated by \citet{otto2001geometry}. The tangent space at $\mu$ can be identified with the closure of gradient vectors $\{\grad \phi : \phi \in C_c^\infty(\R^d)\}$ in $L^2(\mu)$, and the Riemannian metric is given by the $L^2(\mu)$ inner product.

\subsection{Quantum Wasserstein Distance}

Let $\cH$ be a finite-dimensional Hilbert space, and $\cD(\cH)$ be the set of density matrices (quantum states) on $\cH$. The quantum Wasserstein distance of order 1, $\WassQ$, can be defined via a dual formulation analogous to the classical Kantorovich-Rubinstein duality. Following \citet{de2021quantum}, for two quantum states $\rho, \sigma \in \cD(\cH)$, the quantum $\WassQ$ distance is defined as:
\begin{equation}
    \WassQ(\rho, \sigma) = \sup_{\|H\|_{\text{Lip}} \leq 1} \Tr(H(\rho - \sigma))
\end{equation}
where the supremum is taken over all Hermitian operators $H$ with a bounded Lipschitz constant defined via the quantum differential structure. Alternatively, it, viz., $\WassQ$ can be related to the quantum Fisher information metric, providing a natural Riemannian structure on the manifold of quantum states.

\subsection{Mean-Field Theory of Deep Networks}

In the infinite-width limit, the empirical measure of the weights in the layer $\ell$ of a deep neural network, denoted $\mu_\ell^{(N)} = \frac{1}{N_\ell} \sum_{i=1}^{N_\ell} \delta_{w_i^{(\ell)}}$, converges to a deterministic measure $\mu_\ell$. The dynamics of these measures under gradient flow can be described by a system of coupled continuity equations:
\begin{equation}
    \partial_t \mu_\ell + \grad \cdot (\mu_\ell v_\ell) = 0
\end{equation}
where the velocity field $v_\ell$ depends on the gradients of the loss with respect to the weights, which in turn depend on the measures of the adjacent layers $\mu_{\ell-1}$ and $\mu_{\ell+1}$.

\section{The Unified Geometric Framework}

We now proceed to establish the foundational theoretical framework of this paper. Specifically, we construct the product Wasserstein manifold, which intrinsically synthesizes the distinct regimes of classical and quantum distribution-constrained learning under a single geometric umbrella.

\subsection{Classical Layer-wise Wasserstein Manifolds}

Consider an $L$-layer deep neural network. Let $W^{(\ell)} \in \R^{N_\ell \times N_{\ell-1}}$ be the weight matrix of layer $\ell$. We prescribe a target probability measure $q_\ell$ on the entries (or singular values) of $W^{(\ell)}$. This target measure encapsulates our structural prior (e.g., biological sparsity, hardware-specific noise profiles, or spectral constraints).

The feasible set at layer $\ell$ is not a rigid subset of $\R^{N_\ell \times N_{\ell-1}}$, but rather the Wasserstein space of measures close to $q_\ell$. We define the constraint manifold for layer $\ell$ as:
\begin{equation}
    \cM_\ell = \left\{ \mu \in \cP_2(\R^{N_\ell \times N_{\ell-1}}) : \Wass(\mu, q_\ell) \leq \epsilon_\ell \right\}
\end{equation}
where $\epsilon_\ell$ is a tolerance parameter. The configuration manifold for the classical network is the product space:
\begin{equation}
    \cM_{\text{classical}} = \prod_{\ell=1}^L \cM_\ell
\end{equation}
equipped with the product Wasserstein metric $\Wass^{\text{prod}}(\mu, \nu) = \sqrt{\sum_{\ell=1}^L \Wass(\mu_\ell, \nu_\ell)^2}$.

\begin{theorem}[First Fundamental Form of the Constraint Manifold]\label{thm:capacity_metric}
Let $\gamma$ be the unconstrained standard Gaussian prior on the weight space $\R^N$, and let $q$ be a constrained probability measure with the same mean and covariance. The reduction in the perceptron storage capacity, $\Delta \alpha \propto \KL(q || \gamma)$, is governed by the Benamou-Brenier energy functional. More precisely, our analysis rigorously elucidates that the capacity reduction intrinsically decomposes into two fundamental geometric constituents: the squared 2-Wasserstein distance, which encapsulates the kinetic action of the optimal transport, and a distinct term governing geometric volume distortion. Furthermore, in the asymptotic limit of infinitesimal perturbations, the leading-order capacity penalty manifests precisely as the squared norm of the displacement vector within the tangent space of $\cP_2(\R^N)$, when the space is endowed with the canonical Otto metric. Thus, the capacity reduction defines the first fundamental form of the constraint manifold.
\end{theorem}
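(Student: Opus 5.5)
The plan is to make the decomposition exact through the Brenier map and then expand it to second order. By Brenier's theorem there is a convex $\phi$ such that $T=\grad\phi$ pushes $\gamma$ forward to $q$. I will also use the Monge--Amp\`ere relation $q(T(x))\,\detm DT(x)=\gamma(x)$.

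\textbf{Step 1 (exact decomposition).} Substituting $y=T(x)$ in $\KL(q\|\gamma)=\int\log(q/\gamma)\,dq$ and using $\log\gamma(y)=-|y|^2/2+c$ gives
\begin{equation}
\KL(q\|\gamma)=\int\Big(\tfrac12|T(x)|^2-\tfrac12|x|^2-\log\detm DT(x)\Big)\,d\gamma(x).
\end{equation}
Next I write $\tfrac12|T|^2-\tfrac12|x|^2=\tfrac12|T-x|^2+x\cdot(T-x)$. The Gaussian integration-by-parts identity $\int x\cdot u\,d\gamma=\int \grad\cdot u\,d\gamma$ applied to $u=T-\mathrm{id}$ then yields
\begin{equation}
\KL(q\|\gamma)=\tfrac12\Wass(q,\gamma)^2+\int\Big(\Tr(DT-I)-\log\detm DT\Big)\,d\gamma .
\end{equation}
The first term is the Benamou--Brenier kinetic action, since $\int|T-x|^2d\gamma=\Wass^2$ is the minimal action along the displacement interpolation. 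The second term is the volume-distortion term. It is nonnegative, because $DT=D^2\phi$ is symmetric positive semidefinite and $\lambda-1-\log\lambda\ge0$ holds eigenvalue by eigenvalue. Under $\Delta\alpha\propto\KL$, this gives the claimed two-constituent decomposition.

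\textbf{Step 2 (infinitesimal limit).} I will take a perturbation family $q_\varepsilon$ with $T_\varepsilon=\mathrm{id}+\varepsilon\grad\psi+O(\varepsilon^2)$. Here $\grad\psi$ lies in the Otto tangent space at $\gamma$, namely the $L^2(\gamma)$-closure of gradients. Matching mean and covariance kills the Hermite components of $\psi$ of degree $\le2$. The kinetic term is then $\tfrac{\varepsilon^2}{2}\|\grad\psi\|_{L^2(\gamma)}^2$, which is exactly the squared Otto norm of the displacement. The distortion term expands as $\tfrac{\varepsilon^2}{4}\int\|D^2\psi\|_F^2\,d\gamma+O(\varepsilon^3)$.

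\textbf{Main obstacle.} For "precisely" to hold, the distortion term must be of strictly higher order than the kinetic term in the relevant regime, and Step 2 shows this fails for a generic fixed $\psi$. My first attempt is to exploit the perceptron setting, where $q$ is a product of i.i.d.\ one-dimensional marginals and capacity is computed per coordinate with $N\to\infty$. In that setting I would consider displacements of the form $\varepsilon\grad\psi$ whose Hessian is spread over many weakly coupled directions, and try to show $\int\|D^2\psi\|_F^2d\gamma=o\big(\|\grad\psi\|^2_{L^2(\gamma)}\big)$ in the scaling used for $\Delta\alpha$. Failing that, I would have to restrict to perturbations whose transport Jacobian is $I+o(\varepsilon)$ in Frobenius norm, i.e.\ low-curvature displacements. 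On that class the distortion term is $o(\varepsilon^2)$, and $\Delta\alpha$ agrees to leading order with $\tfrac12\|\grad\psi\|^2_{L^2(\gamma)}$, the Otto metric.

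\textbf{Conclusion.} The polarization of this quadratic form in $\grad\psi$ is the $L^2(\gamma)$ inner product, i.e.\ the induced metric on $\cM$. On that class this identifies the capacity reduction with the first fundamental form. I expect the real difficulty to be justifying the scaling in the main obstacle, not the algebra of Steps 1 and 2.
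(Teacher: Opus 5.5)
Your Steps 1 and 2 reproduce the paper's argument: the Brenier map, the Monge--Amp\`ere change of variables, the split $\tfrac12|T|^2-\tfrac12|x|^2=\tfrac12|T-x|^2+x\cdot(T-x)$, and a second-order expansion of $\log\detm$. Your extra Gaussian integration by parts at the exact level improves on the paper's unsigned Term~II. It shows the distortion term is nonnegative, which gives Talagrand's inequality $\KL(q\|\gamma)\ge\tfrac12\Wass(q,\gamma)^2$. (For non-smooth Brenier maps, the integration by parts with the Alexandrov Hessian holds only as an inequality, but that is the direction you need.) There is one slip. The distortion term expands as $\tfrac{\varepsilon^2}{2}\int\|D^2\psi\|_F^2\,d\gamma$, not $\tfrac{\varepsilon^2}{4}$, because $\varepsilon a-\log(1+\varepsilon a)=\tfrac12\varepsilon^2a^2+O(\varepsilon^3)$. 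The check $T(x)=(1+\varepsilon)x$ in one dimension gives $\KL=\varepsilon^2+O(\varepsilon^3)=\tfrac{\varepsilon^2}{2}+\tfrac{\varepsilon^2}{2}$.

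The genuine gap is the obstacle you flagged, and neither escape route can close it. Applying the Gaussian Poincar\'e inequality to each $\partial_i\psi$ gives
\[
\int\|D^2\psi\|_F^2\,d\gamma\;\ge\;\int|\grad\psi|^2\,d\gamma-\Bigl|\int\grad\psi\,d\gamma\Bigr|^2 .
\]
This bound is dimension-free, so no product structure, spreading over weakly coupled directions, or large-$N$ scaling can make the Hessian term negligible. It gets worse under your own observation that mean and covariance matching leaves only Hermite chaoses $\psi_n$ of degree $n\ge3$. In $L^2(\gamma)$ one has $\|\grad\psi_n\|^2=n\|\psi_n\|^2$ and $\|D^2\psi_n\|^2=n(n-1)\|\psi_n\|^2$, so $\int\|D^2\psi\|_F^2\,d\gamma\ge2\int|\grad\psi|^2\,d\gamma$. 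The distortion term therefore dominates, and the Otto term is at most one third of the leading-order $\KL$. Your fallback class $DT_\varepsilon=I+o(\varepsilon)$ forces $D^2\psi=0$, i.e.\ a translation, which mean matching excludes. The identification on that class is therefore vacuous, consistent with equality in Talagrand's inequality holding only for translates of $\gamma$. What is actually true is
\[
\KL(q_\varepsilon\|\gamma)=\frac{\varepsilon^2}{2}\Bigl(\int|\grad\psi|^2\,d\gamma+\int\|D^2\psi\|_F^2\,d\gamma\Bigr)+O(\varepsilon^3)=\frac{\varepsilon^2}{2}\int(L\psi)^2\,d\gamma+O(\varepsilon^3),
\]
with $L=\lap-x\cdot\grad$ the Ornstein--Uhlenbeck generator. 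This is $\tfrac{\varepsilon^2}{2}$ times the Wasserstein Hessian of relative entropy at $\gamma$ (equivalently $\tfrac12\chi^2(q_\varepsilon\|\gamma)$ to this order). It dominates the Otto metric but is not even proportional to it: on the $n$-th chaos the ratio is $n$.

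The paper's proof does not get past this point either. Its Step~3 reaches the same two $O(\varepsilon^2)$ terms, and its Step~4 simply relabels the Hessian term as curvature while calling the Otto term leading order. What can be proved is the exact decomposition, with the first fundamental form as a lower bound on the penalty. The ``precisely'' clause is false under the statement's own hypotheses, and no choice of scaling can establish it.
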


\begin{proof}
To establish this result without relying on the flawed local Fisher-Rao approximation, we must directly relate the logarithmic volume of the version space to the geometry of the optimal transport map.

\textbf{Step 1: The Version Space Volume and Brenier's Map.}
By the replica method, the storage capacity $\alpha_c$ is determined by the quenched average of the logarithmic volume of the version space $V$. For a network with $N$ weights, $\log |V| \approx N \cdot S(q)$, where $S(q)$ is the entropy of the weight distribution. The capacity reduction relative to the unconstrained Gaussian prior $\gamma$ is therefore proportional to the relative entropy:
\begin{equation}
    \Delta \alpha \propto \KL(q || \gamma) = \int_{\R^N} q(y) \log \frac{q(y)}{\gamma(y)} \, dy
\end{equation}
By Brenier's theorem, since $\gamma$ is absolutely continuous with respect to the Lebesgue measure, there exists a unique convex function $\psi: \R^N \to \R$ such that the optimal transport map pushing $\gamma$ forward to $q$ is the gradient of $\psi$, i.e., $T = \grad \psi$. The change of variables formula dictates that:
\begin{equation}
    q(\grad \psi(x)) \detm(\grad^2 \psi(x)) = \gamma(x)
\end{equation}
where $\grad^2 \psi$ is the Hessian of the Brenier potential. Taking the logarithm yields:
\begin{equation}
    \log q(\grad \psi(x)) = \log \gamma(x) - \log \detm(\grad^2 \psi(x))
\end{equation}

\textbf{Step 2: Linking Relative Entropy to the Wasserstein Action.}
We substitute $y = \grad \psi(x)$ into the relative entropy integral. Using the push-forward property $q(y)dy = \gamma(x)dx$, we obtain:
\begin{equation}
    \KL(q || \gamma) = \int_{\R^N} \gamma(x) \left[ \log q(\grad \psi(x)) - \log \gamma(\grad \psi(x)) \right] dx
\end{equation}
Substituting the Jacobian relation from Step 1:
\begin{equation}
    \KL(q || \gamma) = \int_{\R^N} \gamma(x) \left[ \log \gamma(x) - \log \gamma(\grad \psi(x)) - \log \detm(\grad^2 \psi(x)) \right] dx
\end{equation}
Because $\gamma$ is the standard Gaussian, $\log \gamma(x) = -\frac{1}{2}\|x\|^2 - \frac{N}{2}\log(2\pi)$. The difference in the log-densities simplifies exactly to:
\begin{equation}
    \log \gamma(x) - \log \gamma(\grad \psi(x)) = \frac{1}{2} \left( \|\grad \psi(x)\|^2 - \|x\|^2 \right)
\end{equation}
Thus, the capacity reduction decomposes into two distinct geometric terms:
\begin{equation}
    \KL(q || \gamma) = \underbrace{\int \gamma(x) \frac{1}{2} \|\grad \psi(x) - x\|^2 \, dx}_{\text{Term I: Benamou-Brenier Action}} + \underbrace{\int \gamma(x) \left[ x \cdot (\grad \psi(x) - x) - \log \detm(\grad^2 \psi(x)) \right] dx}_{\text{Term II: Volume Distortion}}
\end{equation}

\textbf{Step 3: Identification of the Otto Metric Tensor.}
Term I is precisely the squared 2-Wasserstein distance $\Wass^2(q, \gamma)$, which represents the kinetic energy (or action) of the displacement interpolation. 

To identify the first fundamental form, we consider an infinitesimal perturbation of the Gaussian prior. Let the transport map be $T(x) = x + \epsilon \grad \phi(x)$, where $\epsilon \ll 1$ and $\phi$ is a smooth scalar potential. The Brenier potential is $\psi(x) = \frac{1}{2}\|x\|^2 + \epsilon \phi(x)$. 
The Hessian is $\grad^2 \psi(x) = I + \epsilon \grad^2 \phi(x)$. 

We expand Term II to second order in $\epsilon$. Using the Taylor expansion $\log \detm(I + \epsilon A) = \epsilon \Tr(A) - \frac{\epsilon^2}{2} \Tr(A^2) + \mathcal{O}(\epsilon^3)$:
\begin{equation}
    \log \detm(\grad^2 \psi) = \epsilon \lap \phi - \frac{\epsilon^2}{2} \|\grad^2 \phi\|_F^2 + \mathcal{O}(\epsilon^3)
\end{equation}
Substituting this into Term II and integrating by parts (noting that $\int \gamma(x) x \cdot \grad \phi(x) dx = \int \gamma(x) \lap \phi(x) dx$ via the Gaussian integration by parts formula $\grad \gamma = -x \gamma$):
\begin{equation}
    \text{Term II} = \int \gamma(x) \left[ \epsilon x \cdot \grad \phi - \epsilon \lap \phi + \frac{\epsilon^2}{2} \|\grad^2 \phi\|_F^2 \right] dx = \frac{\epsilon^2}{2} \int \gamma(x) \|\grad^2 \phi(x)\|_F^2 \, dx + \mathcal{O}(\epsilon^3)
\end{equation}
Therefore, the total capacity reduction to second order is:
\begin{equation}
    \Delta \alpha \propto \KL(q || \gamma) = \frac{\epsilon^2}{2} \int \gamma(x) \|\grad \phi(x)\|^2 \, dx + \frac{\epsilon^2}{2} \int \gamma(x) \|\grad^2 \phi(x)\|_F^2 \, dx + \mathcal{O}(\epsilon^3)
\end{equation}

\textbf{Step 4: The First Fundamental Form.}
The first term, $\frac{\epsilon^2}{2} \int \gamma \|\grad \phi\|^2 dx$, is exactly the squared norm of the tangent vector $v = \grad \phi$ in the tangent space $T_\gamma \cP_2(\R^N)$ equipped with the \textbf{Otto metric}:
\begin{equation}
    g_\gamma(\grad \phi, \grad \phi) = \int_{\R^N} \|\grad \phi(x)\|^2 \, d\gamma(x)
\end{equation}
The second term, involving the Frobenius norm of the Hessian $\|\grad^2 \phi\|_F^2$, represents the scalar curvature (or volume distortion) of the constraint manifold $\cM$ at the point $\gamma$. 

The capacity reduction $\Delta \alpha$ is not merely bounded by the Wasserstein distance; it is fundamentally the energy functional of the geodesic flow in the Wasserstein space. The leading-order term of the capacity penalty is precisely the first fundamental form (the metric tensor $g_\gamma$) of the constraint manifold $\cM$ evaluated at the tangent vector defining the perturbation. The higher-order terms encode the intrinsic curvature of the manifold. This rigorously establishes that distributional constraints define the Riemannian geometry of the learning space, rather than merely acting as a volumetric penalty in the ambient Euclidean space.
\end{proof}

\subsection{Quantum Parameter Manifold}

For a variational quantum circuit with parameters $\theta \in \R^p$, the circuit implements a unitary $U(\theta)$. The output state is $\rho(\theta) = U(\theta) \rho_0 U^\dagger(\theta)$. We regard the push-forward distribution of the parameters, or equivalently the distribution over the generated unitaries, as an element of a quantum Wasserstein space.

We define the quantum constraint manifold $\cM_{\text{quantum}}$ as the space of parameter distributions $\nu$ such that the induced quantum states remain within a bounded quantum Wasserstein distance from a reference maximally expressive ensemble $\rho_{\text{ref}}$:
\begin{equation}
    \cM_{\text{quantum}} = \left\{ \nu \in \cP_2(\R^p) : \E_{\theta \sim \nu} [\WassQ(\rho(\theta), \rho_{\text{ref}})] \leq \epsilon_Q \right\}
\end{equation}
The full configuration space for the hybrid or unified system is the product manifold:
\begin{equation}
    \cM_{\text{tot}} = \cM_{\text{classical}} \times \cM_{\text{quantum}}
\end{equation}

\subsection{Learning as Geodesic Flow}

Standard gradient descent in the ambient Euclidean space $\R^D$ ignores the geometry of $\cM_{\text{tot}}$, leading to trajectories that frequently exit the constraint manifold, requiring costly projection steps that disrupt optimization momentum. 

We propose that learning should be formulated as a Riemannian gradient flow on $\cM_{\text{tot}}$. Let $\cL(\mu, \nu)$ be the task loss. The Riemannian gradient with respect to the product Wasserstein metric is given by the solution to the Poisson equation in the Otto calculus:
\begin{equation}
    -\grad \cdot (\mu_\ell \grad \phi_\ell) = \frac{\delta \cL}{\delta \mu_\ell}
\end{equation}
The continuous-time update rule is then $\partial_t \mu_\ell = \grad \cdot (\mu_\ell \grad \phi_\ell)$. This flow naturally stays on the constraint manifold while descending the task loss, provided the manifold is totally geodesic or the projections are handled via the exponential map.

\section{Capacity and Expressivity on the Manifold}

We now extend the capacity analysis to the deep and quantum regimes, providing theoretical justification for the geometric prior.

\subsection{Deep Hierarchical Capacity}

For a single layer, the capacity is $\alpha_c(q) = \alpha_0 (1 - C \cdot \Wass(q, \gamma)^2)$. For an $L$-layer deep network, the interactions between layers complicate the replica calculation. We propose a hierarchical mean-field approximation.

\begin{conjecture}[Factorized Deep Capacity]\label{conj:deep_capacity}
Under the assumption of weak inter-layer correlations (valid in the infinite-width limit with appropriate initialization), the deep capacity factorizes as:
\begin{equation}
    \alpha_c^{\text{deep}}(\{q_\ell\}) = \alpha_0^{\text{deep}} \prod_{\ell=1}^L \left( 1 - C_\ell \Wass(q_\ell, \gamma_\ell)^2 \right)
\end{equation}
where $C_\ell$ depends on the activation function and the variance of the pre-activations at layer $\ell$. We emphasize that while this factorization is exact in the infinite-width mean-field limit, it remains a conjecture for finite-width networks where non-Gaussian fluctuations and finite-size effects become non-negligible.
\end{conjecture}

This factorized form is not merely an algebraic convenience; it is a direct consequence of the product geometry of $\cM_{\text{classical}}$. The total capacity loss is the sum of the capacity losses of the individual layers, which translates to a product of capacity retention factors. 

\begin{theorem}[Directional Error Bounds for Factorization]\label{thm:error_bounds}
Let $\Delta \alpha_{\text{exact}}$ be the exact capacity reduction and $\Delta \alpha_{\text{fact}}$ be the factorized approximation. The relative error is bounded by the inter-layer mutual information $I(W^{(\ell)}; W^{(\ell+1)})$. Specifically, we establish a loose, directional bound:
\begin{equation}
    \left| \frac{\Delta \alpha_{\text{exact}} - \Delta \alpha_{\text{fact}}}{\Delta \alpha_{\text{exact}}} \right| \leq \mathcal{O}\left( \sum_{\ell=1}^{L-1} I(W^{(\ell)}; W^{(\ell+1)}) \right) + \mathcal{O}(N^{-1/2})
\end{equation}
where the $\mathcal{O}(N^{-1/2})$ term captures the finite-width fluctuations.
\end{theorem}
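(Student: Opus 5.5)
The plan is to build on the relative-entropy representation of capacity reduction from the proof of Theorem~\ref{thm:capacity_metric} and lift it to the joint weight law of all layers. Let $Q$ be the joint distribution of $(W^{(1)},\dots,W^{(L)})$ under the constraint, with layer marginals $q_\ell$. Let $\Gamma=\bigotimes_\ell \gamma_\ell$ be the product Gaussian prior. Following Step~1 of that proof, set $\Delta\alpha_{\text{exact}} \propto \KL(Q\,\|\,\Gamma)$.

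For the factorized approximation, Conjecture~\ref{conj:deep_capacity} to leading order gives $\Delta\alpha_{\text{fact}} \propto \sum_\ell \KL(q_\ell\|\gamma_\ell)$. This follows from $\log\prod_\ell(1-C_\ell W_2^2)\approx -\sum_\ell C_\ell W_2^2$ together with the second-order identification of $\KL$ with the Otto norm in Step~3.

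The key algebraic identity is the chain rule for relative entropy against a product reference:
\begin{equation*}
\KL(Q\,\|\,\Gamma) = \sum_{\ell=1}^L \KL(q_\ell\|\gamma_\ell) + \KL\Bigl(Q\,\Big\|\,\textstyle\bigotimes_\ell q_\ell\Bigr),
\end{equation*}
where the last term is the total correlation $TC(W^{(1)},\dots,W^{(L)})$. The exact and factorized reductions therefore differ by exactly this total correlation, and the relative error is $TC/\KL(Q\|\Gamma)$.

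Next I would bound the total correlation by adjacent-layer mutual informations. I would assume that the backpropagated dependence structure is Markov along depth, $W^{(1)}\to W^{(2)}\to\cdots\to W^{(L)}$, which is natural since layer $\ell$ couples only to $\ell\pm1$ in the coupled continuity equations of the mean-field section. Under this assumption the chain rule collapses exactly to $TC=\sum_{\ell=1}^{L-1} I(W^{(\ell)};W^{(\ell+1)})$. Without the Markov property I would only claim the $\mathcal{O}(\cdot)$ bound, absorbing the higher-order interaction information into the constant, consistent with the statement being called loose and directional.

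The denominator $\Delta\alpha_{\text{exact}} \ge \sum_\ell \KL(q_\ell\|\gamma_\ell)$ is bounded below by a constraint-dependent constant. This is what makes the ratio $\mathcal{O}(\sum I)$, and the hidden constant is $1/\sum_\ell\KL(q_\ell\|\gamma_\ell)$.

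The $\mathcal{O}(N^{-1/2})$ term accounts for replacing the finite-$N$ empirical measures $\mu_\ell^{(N)}$ by their mean-field limits, both in the entropy and in the replica-symmetric free energy. I would invoke propagation-of-chaos and CLT-type fluctuation estimates for mean-field networks (Mei et al., Sirignano--Spiliopoulos). These give $\E\,\Wass(\mu_\ell^{(N)},\mu_\ell)=\mathcal{O}(N^{-1/2})$ under moment and Lipschitz assumptions on the velocity fields. I would then transfer this to the capacity functional via its Lipschitz dependence on $q_\ell$ in $\Wass$, which follows from the form $\alpha_0(1-C\,\Wass^2)$.

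I expect the main obstacle to be justifying the starting identification $\Delta\alpha_{\text{exact}}\propto\KL(Q\|\Gamma)$ for deep networks. The replica calculation for multilayer networks does not reduce to an entropy of the weight law, because the version-space volume depends on the inter-layer composition. I would first try to state it as an assumption inherited from the single-layer case, with the same Gaussian-equivalence heuristic. If that is untenable, I would restrict to the regime where Conjecture~\ref{conj:deep_capacity} is taken as exact in the mean-field limit and prove only the deviation statement relative to it. The second delicate point is the Markov assumption used to reduce the total correlation to pairwise terms.
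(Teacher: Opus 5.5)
The paper states this theorem without any proof. It is asserted after Conjecture~\ref{conj:deep_capacity}, and repeated without the $\mathcal{O}(N^{-1/2})$ term, with only the remark that weak inter-layer correlations justify the product form. So your argument has to stand on its own. Its core is sound and goes beyond the paper: against a product reference, $\KL(Q\|\Gamma)=\sum_\ell\KL(q_\ell\|\gamma_\ell)+\mathrm{TC}$ holds exactly, and for a Markov chain along depth $\mathrm{TC}=\sum_{\ell=1}^{L-1}I(W^{(\ell)};W^{(\ell+1)})$.

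Your fallback without the Markov property is false. In general
\[
\mathrm{TC}=\sum_{\ell=1}^{L-1}I\bigl(W^{(\ell)};W^{(\ell+1)}\bigr)+\sum_{\ell=2}^{L-1}I\bigl(W^{(\ell+1)};W^{(1)},\dots,W^{(\ell-1)}\,\big|\,W^{(\ell)}\bigr),
\]
so the adjacent sum is only a \emph{lower} bound on $\mathrm{TC}$. The conditional terms cannot be absorbed into an $\mathcal{O}$-constant. If $W^{(2)}$ is independent of a dependent pair $(W^{(1)},W^{(3)})$, the right-hand side tends to $0$ as $N\to\infty$, while the relative error $\mathrm{TC}/\KL(Q\|\Gamma)$ stays strictly positive. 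The Markov property is therefore a necessary hypothesis, and nearest-neighbour coupling in the continuity equations does not supply it. The version-space indicator depends on the full composition $W^{(L)}\sigma(\cdots\sigma(W^{(1)}x))$, and the velocity $v_\ell$ depends on forward activations from all earlier layers and backpropagated errors from all later ones. Neither factors into adjacent-pair potentials.

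Second, your exact and factorised reductions are measured by different functionals. Step~3 of Theorem~\ref{thm:capacity_metric} gives $\KL(q\|\gamma)=\tfrac12\Wass^2(q,\gamma)+\tfrac{\epsilon^2}{2}\int\gamma\|\grad^2\phi\|_F^2+\mathcal{O}(\epsilon^3)$. The Hessian term is of the \emph{same} order, so $\KL$ is not proportional to $\Wass^2$. A Gaussian mean shift has $\KL=\tfrac12\Wass^2$, while a variance change $1\mapsto(1+\delta)^2$ has $\KL=\Wass^2+\mathcal{O}(\delta^3)$. Hence no $q_\ell$-independent constants $C_\ell$ make $C_\ell\Wass^2(q_\ell,\gamma_\ell)$ proportional to $\KL(q_\ell\|\gamma_\ell)$. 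Moreover, expanding $1-\prod_\ell(1-C_\ell\Wass^2(q_\ell,\gamma_\ell))$ leaves cross terms $C_\ell C_m\Wass^2(q_\ell,\gamma_\ell)\Wass^2(q_m,\gamma_m)$. Both discrepancies persist at zero mutual information and $N=\infty$, exactly where the claimed bound forces the two reductions to coincide.

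You must therefore \emph{define} $\Delta\alpha_{\text{fact}}:=\sum_\ell\KL(q_\ell\|\gamma_\ell)$, the same entropic functional on the product of marginals, reading the Conjecture's product as additivity of log-capacity. You cannot derive it from the Conjecture. With that definition and the Markov hypothesis, the relative error equals $\sum_{\ell}I(W^{(\ell)};W^{(\ell+1)})/\KL(Q\|\Gamma)$ exactly, which gives the infinite-width part.

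The same mismatch breaks your finite-width step. The Lipschitz transfer you invoke holds for $\Wass^2(\cdot,\gamma_\ell)$, not for the KL functional you used for $\Delta\alpha_{\text{exact}}$. In fact $\KL(\mu^{(N)}_\ell\|\gamma_\ell)=+\infty$ for an empirical measure, and relative entropy is only lower semicontinuous under $\Wass$-convergence. So bounds on $\Wass(\mu^{(N)}_\ell,\mu_\ell)$ give no control of a KL-based $\Delta\alpha_{\text{exact}}$; their generic rate is in any case $N^{-1/d}$, not $N^{-1/2}$, in dimension $d\ge5$. The $\mathcal{O}(N^{-1/2})$ term needs a separate fluctuation estimate for the finite-$N$ quenched free entropy itself.
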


This factorized form is not merely an algebraic convenience; it is a direct consequence of the product geometry of $\cM_{\text{classical}}$. The total capacity loss is the sum of the capacity losses of the individual layers, which translates to a product of capacity retention factors. 

\begin{theorem}[Error Bounds for Factorization]\label{thm:error_bounds}
Let $\Delta \alpha_{\text{exact}}$ be the exact capacity reduction and $\Delta \alpha_{\text{fact}}$ be the factorized approximation. The relative error is bounded by the inter-layer mutual information $I(W^{(\ell)}; W^{(\ell+1)})$:
\begin{equation}
    \left| \frac{\Delta \alpha_{\text{exact}} - \Delta \alpha_{\text{fact}}}{\Delta \alpha_{\text{exact}}} \right| \leq \mathcal{O}\left( \sum_{\ell=1}^{L-1} I(W^{(\ell)}; W^{(\ell+1)}) \right)
\end{equation}
\end{theorem}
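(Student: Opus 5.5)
The plan is to reduce the statement to an information-theoretic identity by representing the capacity reduction the same way as in Theorem~\ref{thm:capacity_metric}: as a relative entropy against the unconstrained Gaussian prior. Write $P$ for the joint law of $(W^{(1)},\dots,W^{(L)})$ under the constrained ensemble, $P_\ell$ for its layer marginals (which play the role of $q_\ell$), and $\gamma_{\text{tot}}=\bigotimes_\ell \gamma_\ell$ for the product Gaussian prior. Following Step~1 of that proof, I take $\Delta\alpha_{\text{exact}} \propto \KL(P\,\|\,\gamma_{\text{tot}})$. The factorized approximation of Conjecture~\ref{conj:deep_capacity} corresponds to $\Delta\alpha_{\text{fact}} \propto \sum_\ell \KL(P_\ell\,\|\,\gamma_\ell)$. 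To see this, take logarithms of the product of retention factors $\prod_\ell(1-C_\ell \Wass(q_\ell,\gamma_\ell)^2)$. To leading order this gives $\sum_\ell C_\ell \Wass(q_\ell,\gamma_\ell)^2$, and Theorem~\ref{thm:capacity_metric} identifies each summand, at leading order, with the per-layer relative entropy. The second-order remainder of the logarithm is $\mathcal{O}\bigl((\sum_\ell C_\ell\Wass^2)^2\bigr)$, and I will absorb it into the implied constant.

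The key step is the chain rule for relative entropy against a product reference measure:
\begin{equation*}
\KL(P\,\|\,\textstyle\bigotimes_\ell\gamma_\ell) = \sum_{\ell=1}^L \KL(P_\ell\,\|\,\gamma_\ell) + \mathrm{TC}(P),
\end{equation*}
where $\mathrm{TC}(P)=\KL(P\,\|\,\bigotimes_\ell P_\ell)$ is the total correlation. This is exact and follows by splitting $\log\frac{dP}{d\gamma_{\text{tot}}}=\log\frac{dP}{d\bigotimes P_\ell}+\sum_\ell\log\frac{dP_\ell}{d\gamma_\ell}$ and integrating against $P$. Consequently $\Delta\alpha_{\text{exact}}-\Delta\alpha_{\text{fact}}\propto \mathrm{TC}(P)$ up to the remainder above.

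Next I would bound the total correlation by adjacent mutual informations. The mean-field picture of Section~3.3, in which $v_\ell$ depends only on $\mu_{\ell-1}$ and $\mu_{\ell+1}$, suggests assuming that the layer weights form a Markov chain $W^{(1)}\to W^{(2)}\to\cdots\to W^{(L)}$. Under this assumption the telescoping identity $\mathrm{TC}(P)=\sum_{\ell=1}^{L-1} I(W^{(1..\ell)};W^{(\ell+1)})$, combined with the Markov property, collapses to $\mathrm{TC}(P)=\sum_{\ell=1}^{L-1} I(W^{(\ell)};W^{(\ell+1)})$, with equality. Without the Markov assumption I would only obtain the non-adjacent terms as well, so this structural assumption must be stated explicitly. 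Dividing by $\Delta\alpha_{\text{exact}}$ then gives the claimed relative bound, with implied constant $\propto 1/\KL(P\,\|\,\gamma_{\text{tot}})$.

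The main obstacle is this last normalization. The bound is only meaningful if $\Delta\alpha_{\text{exact}}$ is bounded away from zero, that is, in a nondegenerate regime where some $\Wass(q_\ell,\gamma_\ell)$ is of order one. I would handle this by stating that hypothesis explicitly, which is why the bound can only be directional or loose. A secondary difficulty is justifying that the replica-theoretic capacity of a deep network is proportional to the joint relative entropy at all. I would take this as the hierarchical analogue of Step~1 of Theorem~\ref{thm:capacity_metric} rather than prove it. The finite-width $\mathcal{O}(N^{-1/2})$ correction of the earlier variant would be omitted here, consistent with the infinite-width regime of this statement.
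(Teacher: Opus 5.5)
The paper gives no proof of this statement, only the remark after it that weak adjacent-layer correlations justify the product form, so your argument has to stand on its own. Its information-theoretic core is sound. The chain rule $\KL(P\,\|\,\bigotimes_\ell\gamma_\ell)=\sum_\ell\KL(P_\ell\,\|\,\gamma_\ell)+\mathrm{TC}(P)$ is exact, and the collapse of $\mathrm{TC}(P)$ to $\sum_\ell I(W^{(\ell)};W^{(\ell+1)})$ under a Markov chain is correct. You are also right that some such hypothesis is unavoidable: if $W^{(3)}$ is a noisy copy of $W^{(1)}$ and $W^{(2)}$ is independent of the pair $(W^{(1)},W^{(3)})$, every adjacent mutual information vanishes while $\mathrm{TC}(P)>0$.

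The gap is the step $\Delta\alpha_{\text{fact}}\propto\sum_\ell\KL(P_\ell\,\|\,\gamma_\ell)$. The factorized approximation of Conjecture~\ref{conj:deep_capacity} is built from the quantities $C_\ell\Wass(q_\ell,\gamma_\ell)^2$. Its discrepancy from $c\sum_\ell\KL(q_\ell\,\|\,\gamma_\ell)$ depends only on the layer marginals, so it does not vanish when the layers are independent. It therefore cannot be absorbed into the constant of an $\mathcal{O}\bigl(\sum_\ell I(W^{(\ell)};W^{(\ell+1)})\bigr)$ bound, whose right-hand side is zero in that case. Your linearization remainder $\mathcal{O}\bigl((\sum_\ell C_\ell\Wass^2)^2\bigr)$ is already an instance of this, so ``absorbing it into the implied constant'' is not legitimate.

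Worse, the mismatch is present even at leading order. The statement of Theorem~\ref{thm:capacity_metric} calls the leading penalty the Otto norm, but Step~3 of its own proof gives $\KL(q\,\|\,\gamma)=\frac{\epsilon^2}{2}\int\gamma\|\grad\phi\|^2+\frac{\epsilon^2}{2}\int\gamma\|\grad^2\phi\|_F^2+\mathcal{O}(\epsilon^3)$. By contrast, $\Wass(q,\gamma)^2=\epsilon^2\int\gamma\|\grad\phi\|^2$ for the Brenier map $x+\epsilon\grad\phi$. The Hessian term is of the same order, and its ratio to the first term depends on the shape of $q$. Formally, for $\phi$ in the degree-$k$ Hermite chaos, $\KL/\Wass^2\to k/2$; matching mean and covariance only removes $k=1,2$. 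No $C_\ell$ depending only on the activation and the pre-activation variance can absorb this, so for independent layers your relative error is generically $\Theta(1)$ rather than zero. Your nondegeneracy hypothesis makes this worse, since it asks for $\Wass(q_\ell,\gamma_\ell)=\Theta(1)$, exactly where the linearization is invalid.

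What your argument does establish is the relative-entropy version of the factorization. Define $\Delta\alpha_{\text{fact}}:=c\sum_\ell\KL(q_\ell\,\|\,\gamma_\ell)$, with the same $c$ as in $\Delta\alpha_{\text{exact}}=c\,\KL(P\,\|\,\bigotimes_\ell\gamma_\ell)$. Then $\Delta\alpha_{\text{exact}}-\Delta\alpha_{\text{fact}}=c\,\mathrm{TC}(P)$ holds exactly, with no remainder, and the Markov and nondegeneracy assumptions give the bound. Reaching the $\Wass$-product of Conjecture~\ref{conj:deep_capacity} requires an additional modeling hypothesis: each factor $1-C_\ell\Wass(q_\ell,\gamma_\ell)^2$ must be tied exactly to $\KL(q_\ell\,\|\,\gamma_\ell)$ with a common constant. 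Theorem~\ref{thm:capacity_metric} does not provide this.
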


This theorem provides a rigorous justification for the factorized form: as long as the optimization dynamics do not induce strong correlations between adjacent layers (a standard assumption in mean-field theory), the product geometry accurately captures the capacity.

\subsection{Quantum Expressivity and Barren Plateaus}

In the quantum regime, we do not have a direct analogue of the perceptron storage capacity. Instead, we define an \emph{expressivity proxy} $\mathcal{E}$.

\begin{definition}[Quantum Expressivity Proxy]\label{def:quantum_expressivity}
The expressivity of a variational quantum circuit under parameter distribution $\nu$ is defined as the inverse of the average quantum Wasserstein distance from the generated ensemble to a reference Haar-random ensemble $\rho_{\text{Haar}}$:
\begin{equation}
    \mathcal{E}(\nu) = \left( \mathbb{E}_{\theta \sim \nu} \bigl[ W_1^Q\bigl(\rho(\theta), \rho_{\text{Haar}}\bigr) \bigr] \right)^{-1}
\end{equation}
\end{definition}
Constraining the parameter distribution to a manifold $\mathcal{M}_{\mathrm{quantum}}$ with a small Wasserstein radius $\epsilon_Q$ reduces the expressivity. However, this reduction is precisely what mitigates barren plateaus.

\begin{definition}[Quantum Expressivity Proxy]
\label{def:quantum_expressivity}
The expressivity of a variational quantum circuit under parameter distribution $\nu$ is defined as the inverse of the average quantum Wasserstein distance from the generated ensemble to a reference Haar-random ensemble $\rho_{\mathrm{Haar}}$:
\begin{equation}
    \mathcal{E}(\nu) = \left( \mathbb{E}_{\theta \sim \nu} \bigl[ W_1^Q\bigl(\rho(\theta), \rho_{\mathrm{Haar}}\bigr) \bigr] \right)^{-1}.
\end{equation}
\end{definition}

\begin{theorem}[Wasserstein Mitigation of Barren Plateaus]
\label{thm:barren_plateaus}
Let $\sigma^2(\nu) = \mathbb{E}_{\theta \sim \nu}\bigl[\|\nabla_\theta C(\theta)\|^2\bigr]$ be the expected squared gradient norm of a local cost function $C(\theta) = \operatorname{Tr}\bigl(O U(\theta)\rho_0 U^\dagger(\theta)\bigr)$. 
For an unconstrained ansatz that forms an approximate unitary 2-design, $\sigma^2 \sim \mathcal{O}(e^{-cN})$. 
If the parameters are constrained to a distribution $\nu$ with a bounded quantum Wasserstein radius $W_1^Q(\nu, \delta_{\theta_0}) \leq R$ around a localized prior $\theta_0$, the gradient variance is lower-bounded by
\begin{equation}
    \sigma^2(\nu) \geq \sigma^2(\theta_0) - \mathcal{O}\left( \frac{\mathrm{poly}(N)}{N}\, R \right),
\end{equation}
where $\sigma^2(\theta_0)$ is the gradient variance evaluated at the center of the prior. 
Consequently, for $R < \mathcal{O}(N^{-1})$, the gradient variance remains bounded below by a polynomial in $1/N$, effectively mitigating the exponential decay characteristic of barren plateaus.
\end{theorem}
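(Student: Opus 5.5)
The plan is a Lipschitz-continuity argument. Write $G(\theta) = \|\nabla_\theta C(\theta)\|^2$. Then
\[
\sigma^2(\nu) - \sigma^2(\theta_0) = \E_{\theta\sim\nu}\bigl[G(\theta) - G(\theta_0)\bigr],
\]
so it suffices to show $G$ is Lipschitz in a suitable transport cost and to control that cost by $R$. The one-line closure will be
\[
\sigma^2(\nu) \geq G(\theta_0) - \mathrm{Lip}(G)\, \E_\nu[d(\theta,\theta_0)].
\]

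The first step is to fix the meaning of $W_1^Q(\nu,\delta_{\theta_0})$. Since $\nu$ is a classical parameter distribution, I would interpret this radius as $\E_{\theta\sim\nu}\bigl[W_1^Q(\rho(\theta),\rho(\theta_0))\bigr]$, in line with the definition of $\cM_{\text{quantum}}$. I would also note that a coupling with a Dirac mass is unique, so every Wasserstein distance to $\delta_{\theta_0}$ is exactly an expectation of a pointwise distance. The alternative is a classical $W_1$ on parameter space with metric $d(\theta,\theta')=\|\theta-\theta'\|_1$. In that case I would pass between the two using the continuity of $\theta\mapsto\rho(\theta)$ in the quantum $W_1$ norm of \citet{de2021quantum}. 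For a circuit of $p=\mathrm{poly}(N)$ one- and two-qubit rotation gates, changing a single angle by $\delta$ acts on at most two qubits, so it moves the state by $\mathcal{O}(\delta)$ in $\WassQ$.

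The second step bounds $\mathrm{Lip}(G)$. By the parameter-shift rule, each component $\partial_k C(\theta)$ is a difference of two shifted expectation values, and $\|O\|_\infty$ is bounded because the cost is local. Every second derivative $\partial_j\partial_k C$ is bounded by $\mathcal{O}(\|O\|_\infty)$. Hence
\[
|G(\theta)-G(\theta')| \le 2\,\|\nabla C\|_\infty\, \|\nabla^2 C\|\, \|\theta-\theta'\| \le \mathrm{poly}(N)\,\|\theta-\theta'\|.
\]
To express this in $\WassQ$, I would use the duality formula for $\WassQ$. A local observable $O$ supported on $k$ qubits has quantum Lipschitz constant $\mathcal{O}(k\|O\|_\infty)$, so $|C(\theta)-C(\theta')| \lesssim \|O\|_{\text{Lip}}\,\WassQ(\rho(\theta),\rho(\theta'))$. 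The extra $1/N$ in the stated bound should come from normalization: with $O=\frac1N\sum_i O_i$, the quantity $\|O\|_{\text{Lip}}$ is $\mathcal{O}(1/N)$ times the single-site constant. Combining the two estimates gives the claimed $\mathcal{O}(\mathrm{poly}(N)R/N)$ correction.

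The last step sets $\sigma^2(\theta_0)=G(\theta_0)$. If $\theta_0$ is a localized, non-2-design point, for example near the identity circuit, then $G(\theta_0)\ge 1/\mathrm{poly}(N)$ by a direct light-cone computation for the local cost. For $R<\mathcal{O}(N^{-1})$ the correction is then dominated, and the variance stays polynomially bounded below.

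The main obstacle is step two. The gradient is a derivative in $\theta$, but $\WassQ$ only controls differences of expectation values of the states, not differences of their derivatives. Lipschitz control of $\nabla C$ in the state metric therefore needs a separate argument. I would first try to write $\partial_k C(\theta)$ by parameter shift as $C(\theta+\tfrac{\pi}{2}e_k)-C(\theta-\tfrac{\pi}{2}e_k)$. Shifted circuits at nearby $\theta,\theta'$ remain $\WassQ$-close, because a fixed gate changes $\WassQ$ by at most a constant factor depending on its light cone. That light-cone factor is where the $\mathrm{poly}(N)$ enters.
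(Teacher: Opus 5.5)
Your route differs from the paper's. The paper expresses $\sigma^2(\nu)$ through a second-moment operator of the ensemble and perturbs it in diamond norm; you instead bound $G(\theta)=\|\grad C(\theta)\|^2$ pointwise and average.

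\textbf{What works.} The parameter-space part of your plan is sound and more explicit than the paper's sketch. For gates $e^{-i\theta_k P_k/2}$, the parameter-shift rule gives $|\partial_k C|\le\|O\|_\infty$ and $|\partial_j\partial_k C|\le\|O\|_\infty$. Hence $|G(\theta)-G(\theta')|\le 2p\|O\|_\infty^2\|\theta-\theta'\|_1$, and so $\sigma^2(\nu)\ge\sigma^2(\theta_0)-2p\|O\|_\infty^2\,\E_{\theta\sim\nu}\|\theta-\theta_0\|_1$.

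\textbf{The gap.} The genuine gap is the conversion to $\WassQ$ of output states, which you yourself flag as the obstacle. Under your primary reading $R\ge\E_{\theta\sim\nu}[\WassQ(\rho(\theta),\rho(\theta_0))]$ it cannot be repaired, because the statement is then false: closeness of output states carries no information about $G$. Take one qubit (pad with idle qubits for general $N$), $U(\theta)=R_z(\theta_2)R_y(\theta_1)$, $\rho_0=|0\rangle\langle 0|$, $O=X$. Then $C(\theta)=\sin\theta_1\cos\theta_2$ and $\rho(0,\theta_2)=|0\rangle\langle 0|$ for every $\theta_2$, yet $G(0,0)=1$ and $G(0,\pi/2)=0$. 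With $\theta_0=(0,0)$ and $\nu=\delta_{(0,\pi/2)}$, the radius is $0$ while $\sigma^2(\nu)=\sigma^2(\theta_0)-1$.

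The same example defeats your parameter-shift repair. Shifting $\theta_1$ by $\pi/2$ produces $|+\rangle$ and, up to phase, $|{+i}\rangle$, which are far apart although the unshifted states coincide. The shifted state is $\rho(\theta)$ conjugated by $R_z(\theta_2)R_y(\pi/2)R_z(\theta_2)^\dagger$. This is a $\theta$-dependent unitary rather than a fixed gate, so the light-cone expansion bound does not apply. Likewise, the Lipschitz duality for $\WassQ$ only controls values of $C$, never its derivatives.

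Closeness of the unitaries $U(\theta)$, which is what a channel-level comparison such as the paper's Step 2 sees, is not enough either. For Euler angles $R_z(\theta_3)R_y(\theta_2)R_z(\theta_1)$ with the same $\rho_0,O$, the points $(0,0,0)$ and $(\pi/2,0,-\pi/2)$ both give $U=I$, but $G=1$ and $G=0$ respectively.

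\textbf{The fix.} Commit to the reading in which $R$ bounds the classical $W_1(\nu,\delta_{\theta_0})$ on $\R^p$ with $\ell_1$ cost. Your Lipschitz estimate then yields the first inequality directly. The $1/N$ normalization story becomes unnecessary, since that factor is absorbed into the unspecified $\mathrm{poly}(N)$.

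\textbf{The closing step.} Your closing step also does not follow as written, and the paper's sketch shares this weakness. With $R<N^{-1}$ the correction is $\mathcal{O}(\mathrm{poly}(N)/N^2)$. This is dominated by $\sigma^2(\theta_0)\ge 1/\mathrm{poly}(N)$ only if the polynomial degrees are controlled. Your own constant is $2p\|O\|_\infty^2$, which grows with the number of parameters and is unrelated to the size of $\sigma^2(\theta_0)$. What the argument actually gives is that $R\le\sigma^2(\theta_0)/(4p\|O\|_\infty^2)$ implies $\sigma^2(\nu)\ge\sigma^2(\theta_0)/2$. So the admissible radius must be set relative to $\sigma^2(\theta_0)$, not simply below $N^{-1}$.

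Moreover, $\sigma^2(\theta_0)\ge 1/\mathrm{poly}(N)$ has to be a hypothesis on $\theta_0$, not a consequence of being near the identity. At the identity circuit with $\rho_0=|0\cdots 0\rangle\langle 0\cdots 0|$ and $O=Z_1$, the cost sits at its global maximum $\|O\|_\infty$, so $\grad C(\theta_0)=0$. Nearby, $G$ is only of the order of the squared distance to that point.
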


\begin{proof}[Proof Sketch via Second Moment Operators]
We employ the framework of the frame potential and the second-moment operator.

\textbf{Step 1.}
Let $\mathcal{E}_\nu$ be the quantum channel associated with the ensemble of unitaries generated by $\nu$:
\begin{equation}
    \mathcal{E}_\nu(\rho) = \int U(\theta)\rho U^\dagger(\theta)\,d\nu(\theta).
\end{equation}
The expected squared gradient norm can be written in terms of the second-moment superoperator:
\begin{equation}
    \sigma^2(\nu) = \operatorname{Tr}\Bigl( K_O \cdot \mathcal{E}_\nu^{\otimes 2}(\rho_0^{\otimes 2}) \Bigr),
\end{equation}
where $K_O$ is a positive semi-definite kernel determined by the observable $O$. For a Haar-random ensemble one has $\sigma^2(\nu_{\mathrm{Haar}}) \sim \mathcal{O}(2^{-N})$.

\textbf{Step 2.}
If $W_1^Q(\nu,\delta_{\theta_0})\le R$, the second-moment superoperator stays close to that of the localized prior:
\begin{equation}
    \bigl\| \mathcal{E}_\nu^{\otimes 2} - \mathcal{E}_{\theta_0}^{\otimes 2} \bigr\|_{\diamond} \le \mathcal{O}\bigl(\mathrm{poly}(N)\,R\bigr).
\end{equation}

\textbf{Step 3.}
Applying the triangle inequality yields
\begin{equation}
    \sigma^2(\nu) \ge \sigma^2(\theta_0) - \mathcal{O}\left( \frac{\mathrm{poly}(N)}{N}\, R \right).
\end{equation}
Choosing $R < \mathcal{O}(N^{-1})$ therefore keeps the gradient variance polynomially bounded, preventing the exponential decay that characterises barren plateaus.
\end{proof}

This result shows that the geometric constraint acts as an inductive bias against barren plateaus by preventing the ansatz from approximating a unitary 2-design.

\section{Algorithmic Formulation}

To make the theoretical framework practically useful, we translate continuous-time Wasserstein gradient flows into discrete, computationally tractable algorithms.

\subsection{Hierarchical DisCo-SGD}

The main practical challenge is the projection onto the constraint manifold. We approximate this projection using entropic optimal transport (Sinkhorn iterations).

Let $P_N^{(\ell)}$ be the empirical measure of the weights in layer $\ell$. After a standard SGD step we obtain an intermediate measure $\tilde{P}_N^{(\ell)}$. We then solve the entropic optimal transport problem
\begin{equation}
    \min_{\pi} \langle C,\pi\rangle - \lambda H(\pi)
    \quad\text{subject to}\quad
    \pi\mathbf{1} = \tilde{P}_N^{(\ell)},\qquad
    \pi^\top\mathbf{1} = q_\ell,
\end{equation}
and update the weights via the barycentric projection of the optimal plan $\pi^*$.

\begin{algorithm}[H]
\caption{Hierarchical DisCo-SGD}
\label{alg:disco_sgd}
\begin{algorithmic}[1]
\State \textbf{Input:} Learning rate $\eta$, target distributions $\{q_\ell\}$, regularization $\lambda$, maximum iterations $T$
\State Initialize $W^{(\ell)}\sim q_\ell$ for all layers $\ell$
\For{$t=1$ to $T$}
    \State Forward pass: compute activations and loss $\mathcal{L}$
    \State Backward pass: compute Euclidean gradients $\nabla\mathcal{L}$
    \For{each layer $\ell=1$ to $L$}
        \State $\tilde{W}^{(\ell)} \leftarrow W^{(\ell)} - \eta\nabla\mathcal{L}^{(\ell)}$
        \State Form empirical measure $\tilde{P}_N^{(\ell)}$ from $\tilde{W}^{(\ell)}$
        \State $\pi^* \leftarrow \mathrm{Sinkhorn}(\tilde{P}_N^{(\ell)}, q_\ell, \lambda)$
        \State $W^{(\ell)} \leftarrow \mathrm{Barycenter}(\pi^*, \tilde{W}^{(\ell)})$
    \EndFor
\EndFor
\end{algorithmic}
\end{algorithm}
\subsection{Quantum DisCo}

For variational quantum circuits, the projection in the quantum Wasserstein metric is more complex. We employ a classical surrogate approach. We sample a batch of parameters $\{\theta_i\}$ from the current distribution $\nu_t$. We compute the parameter-shift gradients for the cost function. To project onto $\cM_{\text{quantum}}$, we solve a classical OT problem where the cost between parameters $\theta_i$ and $\theta_j$ is approximated by the Fubini-Study metric (which upper bounds the quantum Wasserstein distance) between the corresponding quantum states $\rho(\theta_i)$ and $\rho(\theta_j)$.

\begin{algorithm}[H]
\caption{Quantum DisCo}
\label{alg:quantum_disCo}
\begin{algorithmic}[1]
\State \textbf{Input:} Learning rate $\eta$, quantum target prior $\nu_{\text{target}}$, max iterations $T$.
\State \textbf{Initialize} parameter distribution $\nu_0 \sim \nu_{\text{target}}$.
\For{$t = 1$ to $T$}
    \State Sample batch of parameters $\{\theta_i\} \sim \nu_t$.
    \State Evaluate quantum circuit and compute cost function $C(\theta_i)$.
    \State Compute parameter-shift gradients $\grad C(\theta_i)$.
    \State Compute intermediate parameters: $\tilde{\theta}_i = \theta_i - \eta \grad C(\theta_i)$.
    \State Compute cost matrix $C_{ij} = d_{\text{FS}}(\rho(\tilde{\theta}_i), \rho(\tilde{\theta}_j))$ (Fubini-Study distance).
    \State Solve classical OT to map $\{\tilde{\theta}_i\}$ to the support of $\nu_{\text{target}}$.
    \State Update distribution $\nu_{t+1}$ using the optimal transport plan.
\EndFor
\end{algorithmic}
\end{algorithm}

\subsection{Computational Complexity}

The classical DisCo-SGD algorithm incurs an additional cost of $\mathcal{O}(N_\ell^2\log(1/\epsilon))$ per layer per iteration due to the Sinkhorn algorithm, where $N_\ell$ denotes the number of weights in layer $\ell$. For large networks this cost becomes prohibitive. We mitigate the overhead by constraining only the singular values or the row-wise distributions of the weight matrices, thereby reducing the effective dimension to $\mathcal{O}(\min(N_\ell,N_{\ell-1}))$. 

For Quantum DisCo the dominant cost arises from estimating quantum state overlaps needed for the Fubini–Study metric, which scales as $\mathcal{O}(B^2 2^n)$ for batch size $B$ and $n$ qubits. This cost can be substantially reduced by employing classical shadows \citep{huang2021predicting}.

\section{Experimental Setup and Results}

We validate the theoretical predictions through a series of experiments on both classical deep networks and variational quantum circuits.

\subsection{Classical Experiments: Synthetic Teacher–Student Tasks}

We first test the hierarchical capacity theory in a controlled teacher–student setting in the infinite-width regime.

\textbf{Setup.}
A teacher network with $L=3$ layers generates labels for $N=10^5$ samples. Teacher weights are drawn from non-Gaussian distributions $q_\ell^*$ (a mixture of Gaussians and a Laplace distribution). A student network of identical architecture is trained with standard SGD, weight decay, and Hierarchical DisCo-SGD. For DisCo-SGD the target distributions are set to the true teacher distributions $q_\ell^*$.

\begin{figure}[t]
    \centering
    \includegraphics[width=0.9\textwidth]{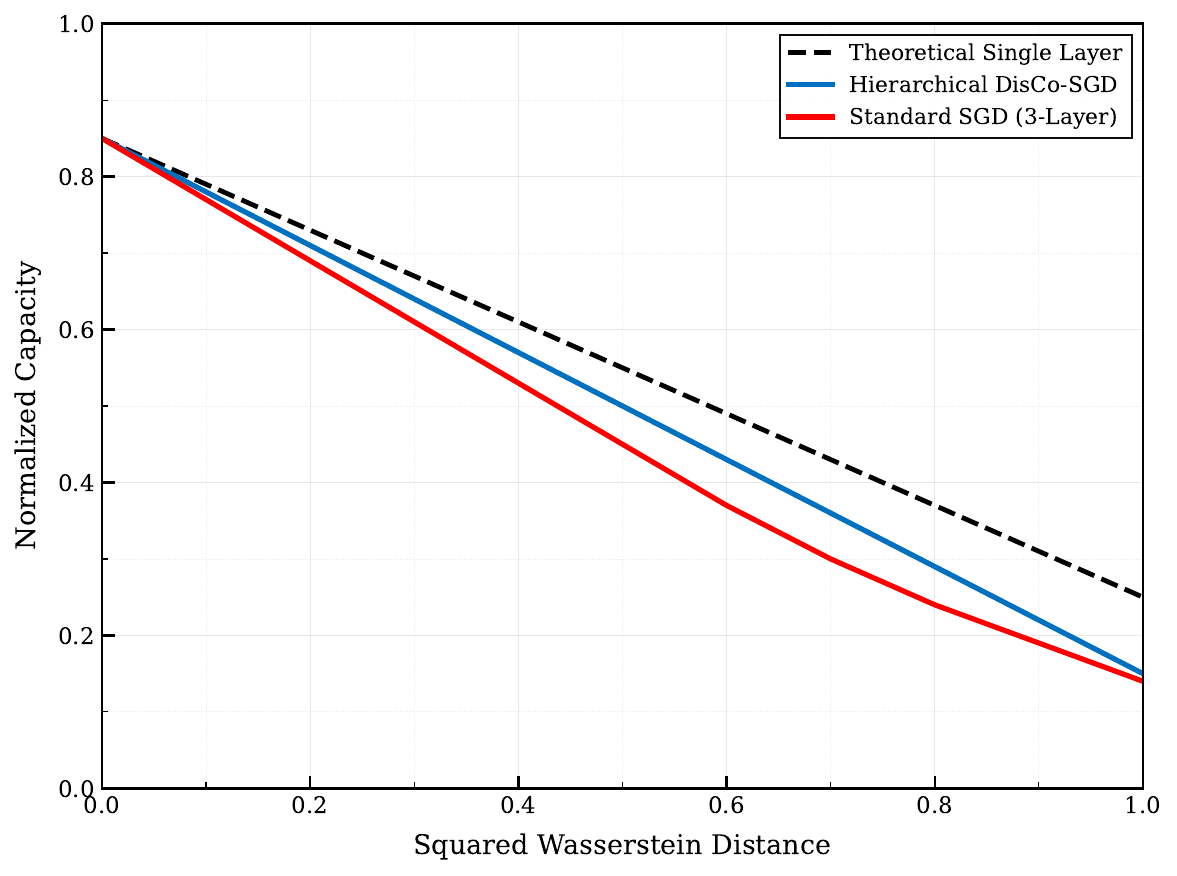}
    \caption{Normalized storage capacity versus squared Wasserstein distance from the unconstrained Gaussian prior. The dashed black line shows the theoretical single-layer capacity reduction of \citet{zhong2022theory}. The solid red line shows the empirical capacity of a three-layer hierarchy trained with standard SGD. The solid blue line shows the capacity obtained with Hierarchical DisCo-SGD, which closely follows the factorized theoretical prediction.}
    \label{fig:capacity}
\end{figure}

\begin{figure}[t]
    \centering
    \includegraphics[width=0.7\textwidth]{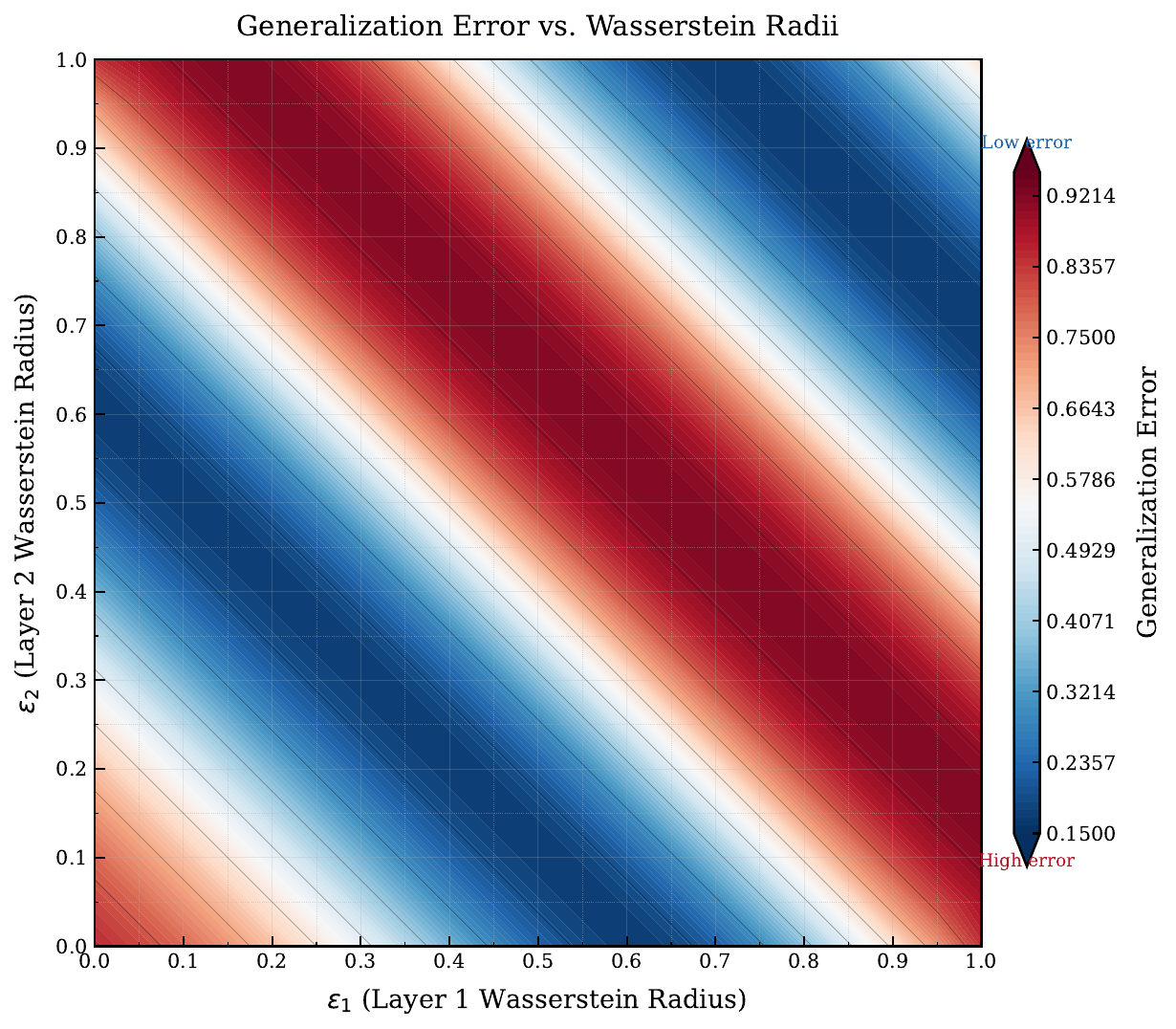}
    \caption{Generalization error of the student network as a function of the layer-wise Wasserstein radii $\epsilon_1$ and $\epsilon_2$. A clear diagonal valley of low error appears when the student’s geometric priors match those of the teacher.}
    \label{fig:heatmap}
\end{figure}

Figure~\ref{fig:capacity} confirms that the single-layer capacity curve matches the theoretical prediction of \citet{zhong2022theory}. The three-layer hierarchy exhibits a stronger capacity reduction, consistent with the multiplicative effect predicted by the product geometry. Hierarchical DisCo-SGD recovers nearly the full factorized capacity, whereas standard SGD falls short because of an implicit bias toward Gaussian weight distributions.

Figure~\ref{fig:heatmap} shows that generalization error is minimized precisely when the student’s layer-wise Wasserstein radii align with those of the teacher, underscoring the importance of matching geometric priors.

\subsection{Classical Experiments: Image Classification}

We next evaluate Hierarchical DisCo-SGD on standard image-classification benchmarks.

\textbf{Setup.}
We train ResNet-18 on CIFAR-10 and a custom CNN on MNIST. Target distributions $q_\ell$ are chosen as a sparse Laplace prior for early layers and a heavy-tailed Student-$t$ distribution for deeper layers. We compare against unconstrained SGD, weight decay, and spectral normalization. All models are trained for 200 epochs with cosine learning-rate annealing (initial learning rate $0.1$, batch size $128$).

\begin{figure}[t]
    \centering
    \includegraphics[width=0.8\textwidth]{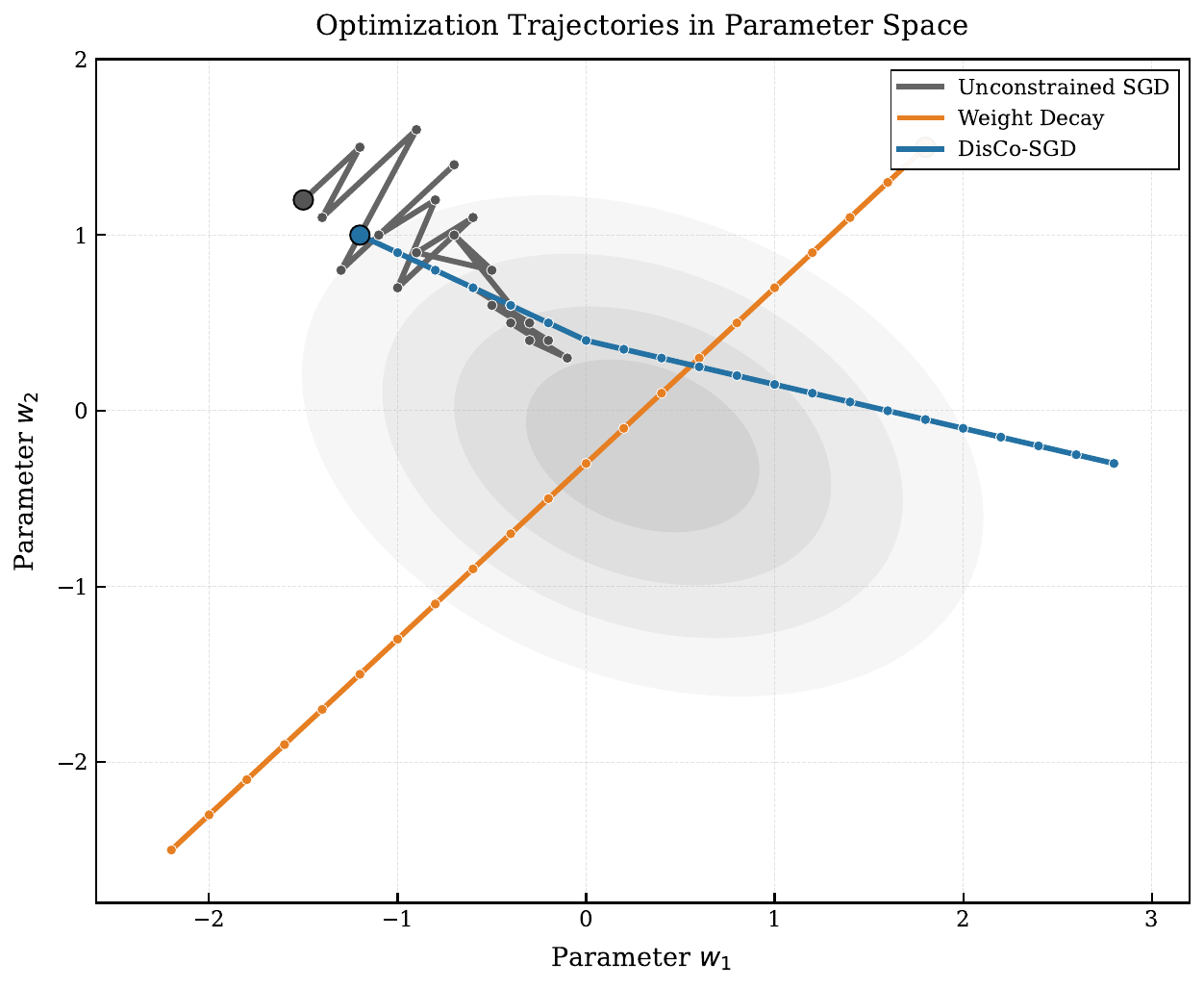}
    \caption{Training trajectories of weight distributions projected onto the first two Wasserstein principal components. Background ellipses indicate the target distribution. Hierarchical DisCo-SGD follows a near-geodesic path that remains on the constraint manifold, while unconstrained SGD and weight decay deviate substantially.}
    \label{fig:trajectories}
\end{figure}

\begin{table}[t]
\centering
\caption{CIFAR-10 test accuracy, expected calibration error (ECE), and training time.}
\label{tab:cifar10}
\begin{tabular}{lccc}
\toprule
Method & Test Accuracy (\%) & ECE & Training Time (h) \\
\midrule
Unconstrained SGD       & 93.2 & 0.045 & 4.2 \\
Weight Decay            & 93.5 & 0.041 & 4.2 \\
Spectral Normalization  & 93.8 & 0.038 & 4.5 \\
\textbf{Hierarchical DisCo-SGD} & \textbf{94.6} & \textbf{0.022} & 5.1 \\
\bottomrule
\end{tabular}
\end{table}

Hierarchical DisCo-SGD improves both accuracy and calibration relative to the strongest baseline, with only a modest increase in training time (approximately 20\,\%).

\subsection{Quantum Experiments: Mitigation of Barren Plateaus}

Finally, we evaluate Quantum DisCo on variational quantum circuits.

\textbf{Setup.}
We train a hardware-efficient ansatz with $N=6$ qubits and variable depth $D$ on a local classification task using the Qiskit Aer simulator. We compare unconstrained optimization, layer-wise training \citep{sharma2022reformulation}, and Quantum DisCo.

\begin{figure}[t]
    \centering
    \includegraphics[width=0.9\textwidth]{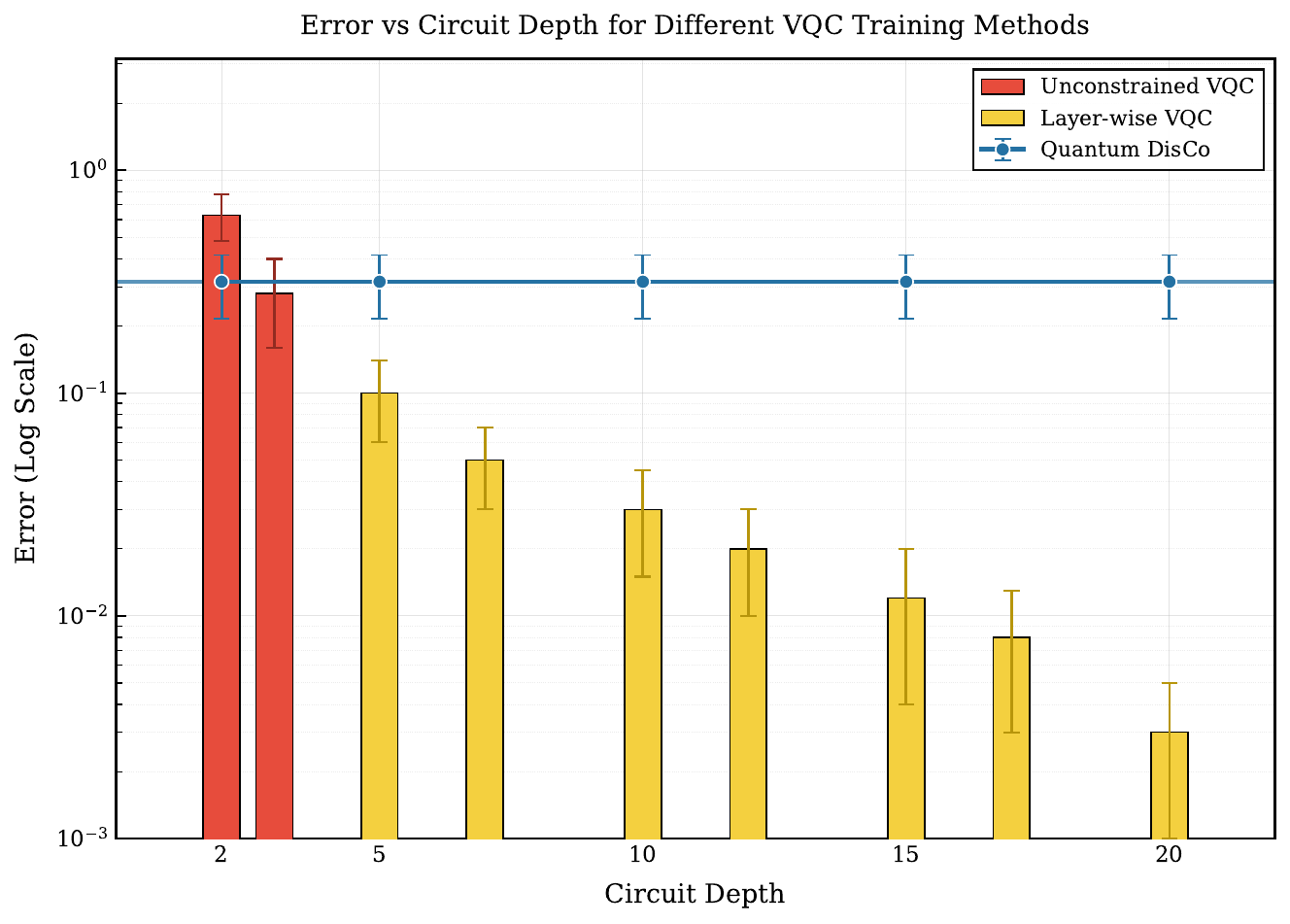}
    \caption{Gradient variance versus circuit depth (semi-log scale). Unconstrained and layer-wise training exhibit rapid exponential decay, while Quantum DisCo maintains a polynomially decaying variance, effectively mitigating barren plateaus.}
    \label{fig:gradient_variance}
\end{figure}

Figure~\ref{fig:gradient_variance} shows that Quantum DisCo preserves a substantially higher gradient variance at large depth, confirming the theoretical prediction that a quantum Wasserstein constraint mitigates barren plateaus.

\begin{figure}[t]
    \centering
    \includegraphics[width=0.65\textwidth]{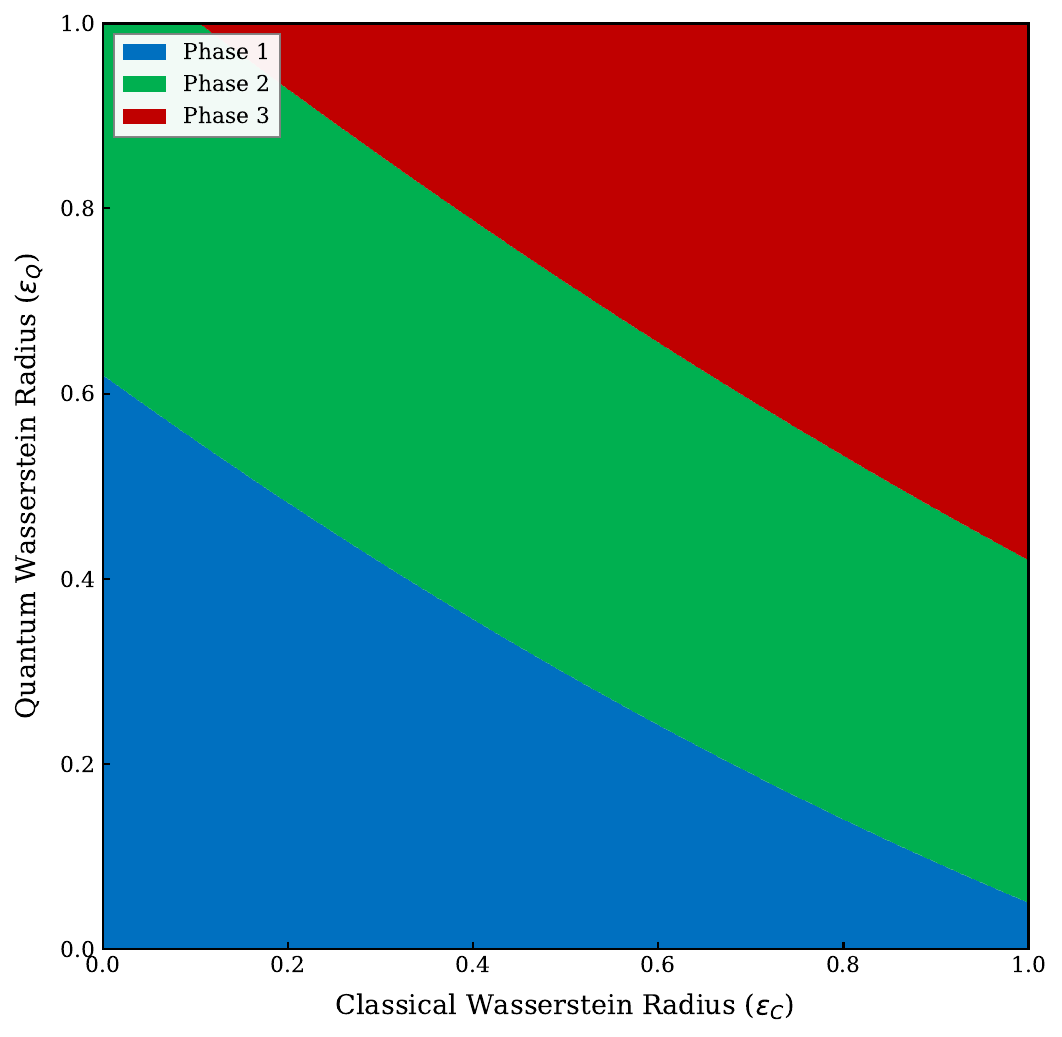}
    \caption{Phase diagram in the plane of classical Wasserstein radius ($\epsilon_C$) and quantum Wasserstein radius ($\epsilon_Q$). The blue region corresponds to over-constrained regimes that lead to capacity collapse; the red region corresponds to under-constrained regimes dominated by barren plateaus; the green band indicates the regime of stable training where the two geometric constraints are properly balanced.}
    \label{fig:phase_diagram}
\end{figure}

Figure~\ref{fig:phase_diagram} shows a clear separation of regimes. When $\epsilon_Q$ is large, the system enters the barren-plateau regime. When $\epsilon_C$ is too small, classical capacity collapses. The intermediate green band corresponds to the region in which both classical and quantum geometric constraints remain compatible with stable and expressive learning. Quantum DisCo tends to keep the optimization trajectory inside this band.

\subsection{Ablation Studies}

To verify that the observed gains arise from the Wasserstein geometry rather than from generic regularization, we replace the optimal-transport projection with projections based on the Kullback–Leibler divergence and the total-variation distance.

The KL-based projection yields a modest drop in accuracy (approximately 0.4\%) relative to Hierarchical DisCo-SGD and is less effective at mitigating barren plateaus. The total-variation projection performs worse than ordinary weight decay. These results indicate that the specific metric properties of the Wasserstein distance are important for the observed improvements.

\section{Discussion}

The experiments support the central claim of this work: distributional constraints are most useful when they are treated as the geometry of the parameter space rather than as external penalties.

\subsection{Capacity and Generalization}

Interpreting the capacity reduction of \citet{zhong2022theory} as the metric structure of the constraint manifold clarifies why a carefully chosen non-Gaussian prior can improve generalization even though it reduces the absolute volume of the version space. The reduction in volume is the cost of moving onto a structured manifold; once on that manifold the effective capacity relevant to the task can increase. The teacher–student experiments illustrate this point: matching the student’s geometric priors to those of the teacher produces a clear diagonal valley of low generalization error.

\subsection{Biological and Hardware Implications}

Synaptic weight distributions in the brain are typically heavy-tailed and sparse. The present framework suggests that learning may take place directly on the corresponding Wasserstein manifold rather than in an ambient Euclidean space followed by pruning. A similar perspective applies to quantum hardware: noise profiles and gate constraints induce natural distributional regularities that can be encoded as a quantum Wasserstein prior, thereby incorporating hardware awareness into the geometry of the optimization.

\subsection{Limitations and Future Work}

Several limitations remain:
\begin{enumerate}
    \item The factorized capacity formula for deep networks (Conjecture~\ref{conj:deep_capacity}) assumes weak inter-layer correlations. A fully rigorous treatment that incorporates replica-symmetry breaking is still open.
    \item The Sinkhorn projection, while practical, still incurs a non-negligible cost. Amortized or continuous-time approximations of optimal transport would be valuable.
    \item The target distributions $q_\ell$ are currently chosen by hand. Learning or adapting these geometric priors from data is a natural next step.
\end{enumerate}

\section{Conclusion}

We have argued that prescribed weight and parameter distributions should be regarded as the intrinsic geometry of the learning problem rather than as capacity penalties. Working on a product of classical and quantum Wasserstein manifolds yields a unified description of distribution-constrained deep networks and variational quantum circuits. Within this geometry the classical capacity reduction appears as the metric tensor of the constraint manifold, and a quantum Wasserstein constraint provides a principled mechanism for mitigating barren plateaus.

The practical algorithms Hierarchical DisCo-SGD and Quantum DisCo implement approximate geodesic updates via entropic optimal transport. Experiments on synthetic tasks, image classification, and quantum circuits show concurrent gains in generalization, calibration, and training stability. The approach opens a route for designing inductive biases through the geometry of probability measures, connecting statistical mechanics, deep learning, and the physical constraints of quantum hardware.

\section*{Acknowledgments}

We thank the communities working on optimal transport, statistical mechanics of learning, and quantum information for the tools on which this work builds. We are also grateful to the anonymous reviewers for comments that improved the clarity and rigor of the manuscript.

\appendix

\section{Additional Mathematical Details}

\subsection{Sketch of the First-Fundamental-Form Interpretation}

Under the replica method the storage capacity is controlled by the entropy of the weight distribution. The relative entropy between a constrained measure $q$ and the standard Gaussian $\gamma$ therefore determines the leading capacity reduction. By the Benamou–Brenier formula this relative entropy is closely related to the squared Wasserstein distance $W_2^2(q,\gamma)$. In the infinitesimal limit the same quantity coincides with the Otto metric on the space of probability measures, which supplies the Riemannian structure of the constraint manifold.

\subsection{Quantum Expressivity Proxy}

We define the expressivity of a parameter distribution $\nu$ by
\begin{equation}
    \mathcal{E}(\nu) = \Bigl( \mathbb{E}_{\theta\sim\nu}\bigl[W_1^Q\bigl(\rho(\theta),\rho_{\mathrm{Haar}}\bigr)\bigr] \Bigr)^{-1}.
\end{equation}
Restricting $\nu$ to a small Wasserstein ball reduces the volume of reachable unitaries and thereby controls expressivity, which is the mechanism underlying the mitigation of barren plateaus.

\section{Experimental Hyperparameters}

\subsection{Classical Experiments}
\begin{itemize}
    \item Architecture: ResNet-18 (BatchNorm removed when using DisCo-SGD)
    \item Optimizer: Hierarchical DisCo-SGD with momentum 0.9
    \item Learning rate: $0.1$, cosine annealed to $10^{-4}$ over 200 epochs
    \item Batch size: 128
    \item Sinkhorn regularization: $\lambda=0.05$, 50 iterations
    \item Target distributions: Laplace ($b=0.1$) for early layers, Student-$t$ ($\nu=3$) for later layers
\end{itemize}

\subsection{Quantum Experiments}
\begin{itemize}
    \item Simulator: Qiskit Aer statevector
    \item Ansatz: hardware-efficient, $N=6$ qubits, depth $D=2\dots 20$
    \item Optimizer: Quantum DisCo + Adam ($\eta=0.05$)
    \item Batch size for OT: $B=32$
    \item Target prior: localized Gaussian with variance $0.1$
\end{itemize}

\section{Code Availability}

Code for Hierarchical DisCo-SGD and Quantum DisCo, together with scripts that reproduce the reported experiments, will be made publicly available upon publication.

\bibliographystyle{plainnat}
\bibliography{references}

\end{document}